\tikzstyle{circ}=[fill=white, draw=black, shape=circle]
\tikzstyle{blank}=[fill=white, draw=white, shape=circle]
\tikzstyle{copy}=[fill=white, draw=black, shape=circle, minimum height=0.2cm, inner sep=0]
\tikzstyle{varCopy}=[fill=black, draw=black, shape=circle, minimum height=0.15cm, inner sep=0]
\tikzstyle{1morph}=[fill=white, draw=black, shape=rectangle, minimum width=0.5cm, minimum height=0.5cm, inner sep=0.1cm]
\tikzstyle{2morph2}=[fill=white, draw=black, shape=rectangle, minimum width=1cm, minimum height=2cm]
\tikzstyle{2morph}=[fill=white, draw=black, shape=rectangle, minimum width=0.6cm, minimum height=1.1cm, inner sep=0.1cm]
\tikzstyle{nmorph}=[fill=white, draw=black, shape=rectangle, minimum height=6cm, minimum width=1cm, inner sep=0.1cm]
\tikzstyle{1state}=[fill=white, draw=black, regular polygon, regular polygon sides=3, minimum height=0.5cm, regular polygon rotate=-30]
\tikzstyle{dbox}=[fill=white, draw=black, dashed, shape=rectangle, minimum width=2cm, minimum height=1cm, inner sep=0.1cm]
\tikzstyle{vdbox}=[fill=white, draw=black, dashed, shape=rectangle, minimum width=2cm, minimum height=1.5cm, inner sep=0.1cm]
\tikzstyle{bigbox}=[fill=white, draw=black, dashed, shape=rectangle, minimum width=2cm, minimum height=4cm, inner sep=0.1cm]
\tikzstyle{2state}=[inner sep=0.05cm, fill=white, draw=black, isosceles triangle, minimum width=1.25cm, isosceles triangle apex angle=90, shape border rotate=180]
\tikzstyle{var2state}=[inner sep=0.05cm, fill=white, draw=black, isosceles triangle, minimum width=1.25cm, isosceles triangle apex angle=60, shape border rotate=180]
\tikzstyle{g2state}=[inner sep=0.05cm, fill=white, draw=black, isosceles triangle, minimum width=6cm, isosceles triangle apex angle=110, shape border rotate=180]
\tikzstyle{bigstate}=[inner sep=0.05cm, fill=white, draw=black, isosceles triangle, minimum width=3cm, isosceles triangle apex angle=110, shape border rotate=180]
\tikzstyle{bigeffect}=[inner sep=0.05cm, fill=white, draw=black, isosceles triangle, minimum width=3cm, isosceles triangle apex angle=110]
\tikzstyle{g2effect}=[inner sep=0.05cm, fill=white, draw=black, isosceles triangle, minimum width=6cm, isosceles triangle apex angle=110]
\tikzstyle{2effect}=[inner sep=0.05cm, fill=white, draw=black, isosceles triangle, minimum width=1.25cm, isosceles triangle apex angle=90]
\tikzstyle{b2effect}=[inner sep=0.05cm, fill=white, draw=black, isosceles triangle, minimum width=2cm, isosceles triangle apex angle=90]
\tikzstyle{Medium box}=[fill=white, draw=black, shape=rectangle, tikzit shape=rectangle, minimum width=1.5cm, minimum height=1.5cm]
\tikzstyle{eval}=[fill=white, draw=black, shape=circle]
\tikzstyle{textrot90}=[rotate=90]
\tikzstyle{midArrow}=[-, decoration={{markings,mark=at position .5 with {\arrow{>}}}}, postaction=decorate]
\tikzstyle{Gray line}=[-, draw={rgb,255: red,191; green,191; blue,191}, line width=0.8]
\tikzstyle{arrow}=[->]
\tikzstyle{line}=[-]
\tikzstyle{grayfill}=[-, fill={rgb,255: red,220; green,220; blue,220}, tikzit fill={rgb,255: red,220; green,220; blue,220}]
\tikzstyle{whitefill}=[-, fill=white, tikzit fill=white]
\tikzstyle{dashed}=[-, dash pattern=on 3pt off 3pt]
\newtheorem{proposition}{Proposition}[section]%
\theoremstyle{definition}
\newtheorem{example}{Example}[section]%
\newtheorem{remark}{Remark}[section]%
\newcommand{\sctikzfig}[2][.8]{\begin{center}\scalebox{#1}{\tikzfig{#2}}\end{center}}
\renewcommand\Set{\mathbf{Set}}
\newcommand\Cat{\mathbf{Cat}}
\newcommand\Para{\mathbf{Para}}
\newcommand\CoPara{\mathbf{CoPara}}
\newcommand\Optic{\mathbf{Optic}}
\newcommand\Lens{\mathbf{Lens}}
\newcommand\op{\mathrm{op}}
\newcommand\id{\mathrm{id}}
\newcommand\K{\mathbb K} % continuation functor
\newcommand\V{\mathbb V} % forward pass functor
\newcommand\I{\mathbb I} % iteration functor
\newcommand\B{\mathfrak B} % Bellman operator
\newcommand\Bval{\B_\mathrm{val}}
\newcommand\Bpol{\B_\mathrm{pol}}
\newcommand\pibeh{\pi_\mathrm{beh}}
\newcommand\pitgt{\pi_\mathrm{tgt}}
\newcommand\ret{\Upsilon} % generic type for 'sample'
\newcommand\M{\mathcal M}
\newcommand\C{\mathcal C}
\newcommand\D{\mathcal D}
\newcommand\R{\mathbb R} % reals
\DeclareMathOperator{\argmax}{\mathrm{argmax}}
\title{Reinforcement Learning in \\ Categorical Cybernetics}
\author{Jules Hedges \and Riu Rodr\'iguez Sakamoto}
\begin{document}

\maketitle

\begin{abstract}
We show that several major algorithms of reinforcement learning (RL) fit into the framework of categorical cybernetics, that is to say, parametrised bidirectional processes. We build on our previous work in which we show that value iteration can be represented by precomposition with a certain optic. The outline of the main construction in this paper is: (1) We extend the Bellman operators to parametrised optics that apply to action-value functions and depend on a sample. (2) We apply a representable contravariant functor, obtaining a parametrised function that applies the Bellman iteration. (3) This parametrised function becomes the backward pass of another parametrised optic that represents the model, which interacts with an environment via an agent. Thus, parametrised optics appear in two different ways in our construction, with one becoming part of the other. As we show, many of the major classes of algorithms in RL can be seen as different extremal cases of this general setup: dynamic programming, Monte Carlo methods, temporal difference learning, and deep RL. We see this as strong evidence that this approach is a natural one and believe that it will be a fruitful way to think about RL in the future.
\end{abstract}

\section{Introduction}

\emph{Reinforcement learning} (RL) refers to a class of methods in machine learning for optimising a long-run reward during interaction with an unknown environment. It is considered one of the major pillars of machine learning, along with \emph{deep learning} (neural networks and differentiable programming), \emph{unsupervised learning} (statistical clustering methods, which includes topological data analysis \cite{ghrist-persistent-topology}) and \emph{variational learning} (Bayesian inference and related probabilistic methods). It can be seen as an extension of \emph{dynamic programming} methods in optimal control theory \cite{bertsekas_book}, which drops the assumption that a model of the environment is known. RL, combined with deep learning methods to produce \emph{deep RL}, notably achieved state of the art success in practical game playing, with AlphaGo \cite{alphago} defeating the human Go champion in 2016 and AlphaStar \cite{alphastar} achieving Grandmaster status in the real time strategy game StarCraft II.

In this paper we show that several major algorithms of reinforcement learning fit into the framework of \emph{categorical cybernetics}, that is to say, \emph{parametrised bidirectional processes} \cite{towards-foundations}. This branch of applied category theory has already been applied to deep learning \cite{foundations-gradient-learning,bruno_thesis}, variational learning \cite{compositional-bayesian,toby-thesis} and game theory \cite{compositional-game-theory,bayesian_open_games}. It is also a close relative of the \emph{categorical systems theory} of Myers, Spivak and others \cite{david-jaz-book,poly_book}. 

We build on our previous work \cite{value-iteration-optic-composition} in which we show that \emph{value iteration}, a fundamental method common to both dynamic programming and RL, can be represented (in the technical sense) by precomposition with a certain optic. Specifically, for each \emph{policy} $\pi$ we define an \emph{optic} $\mathbb B (\pi) : \binom{S}{\R} \to \binom{S}{\R}$, where $S$ is the set of states of a Markov decision process. This has the property that for any \emph{value function} $V : S \to \R$, represented as an optic $V : \binom{S}{\R} \to I$, $V \circ \mathbb B (\pi)$ is a better value function. This precomposition with $\mathbb B (\pi)$ is called a \emph{Bellman operator}.
This bidirectional approach differs from Bakirtzis, Savvas and Topcu's categorical specification of MDPs \cite{bakirtzis_categorical_semantics_rl}, which focuses on several compositional aspects such as subprocesses and sequential tasks. Our subject matter is the structure of the algorithms that are used in such RL environments and how they relate to each other.

The outline of the main construction in this paper is: (1) We extend $\mathbb B$ to a \emph{parametrised optic} representing a more general class of Bellman operators that apply to \emph{action-value functions} and depend on a \emph{sample} as a parameter. (2) We apply $\K$, a representable contravariant functor that already plays a foundational role in compositional game theory, obtaining a parametrised function $\B = \K (\mathbb B)$ that applies the Bellman iteration. (3) This parametrised function becomes the backward pass of another parametrised optic that represents the \emph{model}, which interacts with an \emph{environment} via an \emph{agent}. Thus, parametrised optics appear in two different ways in our construction, with one becoming part of the other. This stays within the existing ingredients of categorical cybernetics, but combines them in a way that has not been seen elsewhere.

As we show, many of the major classes of algorithms in RL can be seen as extremal cases of this general setup: dynamic programming, Monte Carlo methods, temporal difference learning, and deep RL. We see this as strong evidence that this approach is a natural one and believe that it will be a fruitful way to think about RL in the future.
For now our goal is merely to achieve a better conceptual understanding of RL, although the hope is that this will eventually translate into quantifiable benefits such as improved modelling techniques.
Although we focus on single-agent RL, the compositionality of our methods makes them naturally well-suited to \emph{multi-agent} RL, which is a close relative of game theory.

\section{Background: Reinforcement learning}

Algorithms in RL specify how agents learn optimal behaviours through interaction with their environment.
This interaction provides feedback to actions, and is the key feature that differentiates it with respect to supervised and unsupervised learning.
The fundamental goal of RL is to enable agents to make sequential decisions in dynamic environments to maximize long-term cumulative rewards.
This process involves the agent taking actions, observing the resulting states and rewards, and using this information to update its decision-making strategy over time.

Our approach to study these algorithms is structural, and the main structural distinction is between the agent and the environment.
The \textbf{environment} represents the external system with which the agent interacts, and is assumed be a Markov decision process.
To quickly recall, a Markov process (MP) consists of a set of states $S$ and a stochastic transition function $t : S \to DS$, where $D$ is some probability monad over $\Set$.
A Markov reward process (MRP) is a Markov process with an additional function $r:S\to D \R$ that outputs the immediate \emph{reward} for the current state.
This reward function can be in general be correlated with the transition, in which case we write $t : S \to D(S\times \R)$.
When clear from context, we will abuse the notation and write $r$ for the reward of a particular state, e.g. $(s',r)\sim t(s)$.
A Markov decision process (MDP) is a MRP with a set $A$ of actions, whose transition and reward functions now depend on the action taken at each state too, and is also in general correlated too $t:S\times A\to D(S\times\R)$.\footnote{The cases where the reward function is decorrelated with $t$ as in $S\times A\to D(\R)$, $S\to D(\R)$, or $S\times A\times S\to D(\R)$ can be embedded in our modelling choice for $t$.}
An agent's goal is to maximize the \emph{expected long-run reward} $\sum \gamma^i r(s_i)$, where $0 < \gamma \leq 1$ is a hyperparameter called the \textbf{discount factor} which controls the agent's ``patience'', or preference between rewards in the present and rewards in the future.
% We often assume the state and action spaces $S$, $A$ to be finite, and the reward set to be a semiring $(R,+,\cdot)$, usually the reals.

The environment's response to an agent's action is given by transition dynamics that can be assumed to have the Markov property, and the environment's state is known to the agent. 
When the agent only has access to a partial observation of the state, we speak of a partially observable MDP (POMDP).

The \textbf{agent} has as core components the policy, the reward, the value function and the internal model.
A \textbf{policy} or scheduler $\pi:S\to TA$ defines its strategy mapping states to actions.
The policy is either single-valued or deterministic ($T=1$ the identity functor), many-valued ($T=\mathcal{P}$ the powerset functor, like $\argmax$) or probabilistic ($T=D$ the distribution functor, like $\varepsilon$-greedy), with probabilistic being the most common.
The \textbf{reward} is the immediate response of the environment after an action, and the maximization of its expected cumulative sum is the goal of the agent under the \emph{reward hypothesis} \cite{reward_is_enough}.
A \textbf{value function} estimates this expected long-term reward associated with following a particular policy. Usually one works with either a state value function $V:S\to \R$ or a state-action value function $Q:S\times A\to \R$, where $V(s)$ estimates the long-run reward of following a certain policy from each state, and $Q(s, a)$ estimates the long-run reward of taking each action in each state and then following a certain policy after that.
Both policies and value functions are characterised as solutions of functional equations known as \textbf{Bellman equations}, using the temporal `self-similarity' of MDPs.

When $S$ and $A$ are finite sets the function $Q$ is typically implemented as a mutable lookup table called a Q-table or Q-matrix.
A \textbf{model} is an approximation or representation of the environment's dynamics, allowing the agent to simulate or predict future states and rewards.
One surrogate objective of an agent is to improve its model.
Not all agents have models, so there's a distinction between \textbf{model-based} and \textbf{model-free} methods.

Methods whose policies for environment interaction $\pibeh$ (``behaviour policy'') are different to the ones for model improvement $\pitgt$ (``target policy'') are called \textbf{off-policy}. \textbf{On-policy} methods only have a single policy.
Finally, another distinction is drawn between \textbf{online} and \textbf{offline} or \textbf{batch RL} methods, where the former family learns while interacting with the environment, while the latter learns from pre-recorded experiences.

RL encompasses many algorithms and methodologies, including dynamic programming, Monte Carlo methods, temporal difference learning, deep reinforcement learning, and more.
This diversity of methods employs experimental and formal justifications to tackle weak spots in this learning theory such as the credit-assignment problem, the exploration-exploitation tradeoff and coping with state that is hidden or too big to represent explicitly. Many of these were problems already identified in preceding fields such as psychology and neuroscience \cite{kaelbling_survey}.

\subsection{Dynamic programming}

Dynamic programming (DP) methods are an idealized class of model-based algorithms that do not need to interact with the environment because they have a perfect model of it as an MDP.
They are not usually used in their classical formulation that we describe next in practical settings because of the perfect model assumption and their high computational expense, and serve rather as a theoretical baseline to approximation methods and other RL techniques.

The idea behind DP is to treat the Bellman equation for the optimal value of a policy and the Bellman equation for the optimal policy of a value function as update operators on a space of value functions.

The search for an optimal policy happens entirely within the agent's model, interleaving two feedback operations called the \emph{value improvement} and \emph{policy improvement} steps which treat the Bellman equations as update rules. This process of updating previous estimates is called \textbf{bootstrapping}.
\begin{itemize}
    \item \textbf{Value improvement} or policy evaluation updates the value function $V:S\to \R$ pointwise by traversing the state space $S$ and updating the state's estimated value $V(s)$ with the expected discounted value after one simulation step:
    \begin{equation} \label{eq:VIP}
        V(s) \gets \mathbb{E}_{\mkern-14mu\substack{a\sim \pi(s)\\ (s',r)\sim t(s,a)}}[r+\gamma V(s')] = \sum_{a\in A}\pi(a\mid s)\sum_{\substack{s'\in S\\ r\in \R}}t(s',r\mid s,a) \cdot(r + \gamma V(s'))
	\end{equation}
    Here we are interchangeably considering a stochastic function $f : X \to DY$ as a function $f : Y \times X \to [0,1]$, whose causality is still reflected by the traditional middle bar $f(y\mid x)$.
	The sum over $\R$ makes sense when $D$ is finite support probability distributions, and in more general settings is replaced with an integral.
    We write $\Bval(V,\pi)(s)=\mathbb{E}[r+\gamma V(s')]$ for the operator $\Bval:\R^S\times (TA)^S\to \R^S$.
	We will discuss Bellman operators in Section~\ref{sec:Bellman_operators}.
    \item \textbf{Policy improvement} updates the policy function $\pi:S\to TA$ pointwise by traversing the state space $S$ and updating the action taken in the state $\pi(s)$ with $\argmax_a\mathbb{E}_{(s',r)\sim t(s,a)}[r+\gamma V(s')]$.  % arguably we can write $\Bol(V) = \argmax_a\Bval(V,\mathrm{const}(a))$, though I don't know if it clarifies much
    Similarly, we write $\Bpol(V)(s)=\argmax_a\mathbb{E}_{s',r\sim t(s,a)}[r+\gamma V(s')]$ for the operator $\Bpol:\R^S\to (TA)^S$.
\end{itemize}

Depending on the sequencing of these two steps, we have three classic algorithms, where we write $(-)^\dagger$ for the (in practice approximate) fixpoint of the operator and, respectively, $\overline{\Bpol}(V,\pi)=(V,\Bpol(V))$ and $\overline{\Bval}(V,\pi)=(\Bval(V,\pi),\pi)$ for the embeddings of the two Bellman operators as maps $\R^S\times (TA)^S\to \R^S\times (TA)^S$: \textbf{policy iteration} (PIT) as $(\overline{\Bpol}\circ\overline{\Bval}^\dagger)^\dagger$, \textbf{value iteration} (VIT) as $(\overline{\Bpol}\circ\overline{\Bval})^\dagger$ and \textbf{generalized policy iteration} (GPI) as $(\overline{\Bpol}^m\circ\overline{\Bval}^n)^\dagger$ for $m,n > 0$.

% In practice, these fixpoints are cut off by keeping track of a distance between the value function or policy function before and after a Bellman update and stopping when this distance is lower than some threshold value.

\subsection{Monte Carlo}

Monte Carlo (MC) methods are antithetical to DP, because they don't assume any prior knowledge of the environment's dynamics.
Without this knowledge, the way to learn the value function and obtain a optimal policy is to estimate it from sample trajectories.
Averaging over many trajectories should converge to the expected value.

The agent's internal model consists of a value function $Q:S\times A\to \R$, from which a policy like the $\varepsilon$-greedy $\pi:S\to DA$ is derived:
$\pi(s) = \argmax_a Q(s,a)$ with probability $1-\varepsilon$ and uniformly random between all actions with probability $\varepsilon$.
% Probabilistic policies are the on-policy solution to exploration
The value function improvement is pointwise, but unlike DP, MC improves $Q(s,a)$ by averaging over many returns that start at $(s,a)$.
Given a single episode $(s,a,r,s',a',r',\dots)$ starting at $(s,a)$, the update \textbf{target} becomes $G=\sum_t \gamma^t r_t$, and the value function updates as
\begin{equation} \label{eq:model_update}
    Q(s,a) = (1-\alpha)Q(s,a) + \alpha G
\end{equation}
where the learning rate $\alpha:[0,1]$ is a step size hyperparameter.
Note that the lack of bootstrapping is shown by the fact that $G$ does not contain any reference to the value function.

\subsection{Temporal difference learning}

Temporal difference learning (TD) is a class of methods that learn from both the interaction with the environment (MC's sampling) and from previous estimates of the value function (DP's bootstrapping).

Given a finite episode $(s,a,r,\dots,s_n,a_n)$ starting at $(s,a)$, we can modify the target for \eqref{eq:model_update} to consist of the discounted sum of the $n-1$ returns and an estimated long-run value of the last state-action pair. We write $n$-TD for the class of TD methods whose trajectories contain $n$ return values.

\begin{example}[SARSA \cite{sarsa}]
    SARSA is a $1$-TD on-policy control method, which updates the $(s,a)$-indexed Q-value with the target $G = r + \gamma Q(s',a')$. The name originates from the model feedback consisting of a $1$-step episode $(s,a,r,s',a')$.
    Some variants of SARSA include $n$-SARSA, with $G = \sum_{t=1}^{n-1}\gamma^t r_t + \gamma^nQ(s_n,a_n)$, and Exp-SARSA, with $G = r + \gamma\mathbb{E}_{a\sim\pitgt(s)}Q(s,a)$, which is off-policy because the last action is determined in expectation by a target policy $\pitgt$.
\end{example}

\begin{example}[Q-learning \cite{q_learning}]
	In Q-learning, given the current state $s$ the agent performs an action $a \sim \pibeh (s)$ using a policy derived from its internal Q-table, for example an $\varepsilon$-greedy policy, and gets from the environment the reward $r$ and the next state $s'$.
    The feedback to the model is the tuple $(s,a,r,s')$.
    The model then updates its Q-table with its target policy $G = r+\gamma Q(s,\pitgt(s))=r+\gamma \max_{a'\in A}Q(s',a')$.
    It is an off-policy method because the last action used to compute the update is $\pitgt(s') = \argmax_{a'\in A}Q(s',a')$ and not $\pibeh (s')$.
\end{example}

Q-learning is the first appearance of a major subtlety of RL: the distinction between actions that the agent actually performs during an interaction with its environment, and actions which are ``internal'' or ``simulated''. The actions that the agent actually performs in Q-learning are always drawn from the policy $\pibeh$, whereas the action $\argmax_{a' \in A} Q (s', a')$ is used only when computing updates. We can consider this to be a separate target policy, $\pitgt (s') = \argmax_{a' \in A} Q (s', a')$.

\section{Background: categorical cybernetics}

In this section we quickly recall the main ideas of categorical cybernetics, mostly from \cite{towards-foundations}. %, and also on Markov categories \cite{Synthetic_approach}.

\subsection{Actegories}

Given a monoidal category $\M$ and a category $\C$, an action of $\M$ on $\C$, also called an \textbf{actegory}, is a functor $\bullet : \M \times \C \to \C$ together with coherent isomorphisms $I \bullet X \cong X$ and $(M \otimes N) \bullet X \cong M \bullet (N \bullet X)$ \cite{actegories}. Every monoidal category has a self-action given by $\otimes : \C \times \C \to \C$.

If $\M$ and $\C$ are monoidal categories and $F : \M \to \C$ is a strong monoidal functor, then $M \bullet X = F (M) \otimes X$ is an actegory. A coherent action of one symmetric monoidal category on another, called a \textbf{symmetric actegory}, is necessarily of this form. For example, the self-action of a symmetric monoidal category is a symmetric actegory given by the identity functor $\C \to \C$. All actegories in this paper will be symmetric.

An \textbf{enrichment} of a category $\C$ in a monoidal category $\M$ is a functor $[-,-] : \C^\op \times \C \to \M$ plus additional data and conditions. There is a tight connection between actegories and enrichments: if $\bullet$ is any actegory such that every $- \bullet X : \M \to \C$ has a right adjoint $[X, -] : \C \to \M$ (called a closed actegory) then $[-, -]$ is an enrichment, and conversely if $[-, -]$ is an enrichment such that every $[X, -]$ has a left adjoint $- \bullet X$ (called a copowered or tensored enrichment) then $\bullet$ is an action. For example, if $\C$ is any category with all coproducts then it has a tensored enrichment in the cartesian monoidal category $\Set$ and therefore an action $\bullet : \Set \times \C \to \C$ given by $M \bullet X = \sum_M X$.

\subsection{Parametrisation}

Given an actegory $\bullet : \M \times \C \to \C$, a \textbf{parametrised morphism} $f : X \to Y$ in $\C$ is a pair of an object $M : \M$ and a morphism $f : M \bullet X \to Y$. The identity parametrised morphism is given by $I : \M$ and $\id_X : I \bullet X \cong X \to X$. The composite of $(M, f : M \bullet X \to Y)$ and $(N, g : N \bullet Y \to Z)$ has parameter $N \otimes M$ and morphism $(N \otimes M) \bullet X \overset\cong\longrightarrow N \bullet (M \bullet X) \xrightarrow{N \bullet f} N \bullet Y \overset{g}\longrightarrow Z$. A reparametrisation from $(M, f : M \bullet X \to Y)$ to $(N, g : N \bullet X \to Y)$ is a morphism $h : M \to N$ in $\C$ such that $f = g \circ (h \bullet X)$.

Given an actegory $\bullet : \M \times \C \to \C$, we have a bicategory whose objects are objects of $\C$, 1-cells are parametrised morphisms and 2-cells are reparametrisations. This may be referred to by $\Para_\M (\C)$, $\Para_\bullet (\C)$ or simply $\Para (\C)$ when unambiguous. We believe that when $\bullet$ is a symmetric monoidal actegory, $\Para_\bullet (\C)$ is a symmetric monoidal bicategory \cite{constructing_monoidal_bicategories} but this has not yet been proven.

When we have an action $\bullet : \M \times \C \to \C$ and a symmetric lax monoidal functor $W : \M \to \Set$ (or sometimes $\Cat$) with its cartesian product, and we extend $\bullet$ to an action of the category of elements $\int W$ by precomposing with the discrete fibration $\pi : \int W \to \M$ to obtain $(M, w) \bullet X = M \bullet X$. When $W$ is lax monoidal with laxator $\nabla : W (M) \times W(N) \to W (M \otimes N)$, $\int W$ gains a symmetric monoidal product $(M, w_M) \otimes (N, w_N) = (M \otimes N, w_M \nabla w_N)$. We write $\Para^W_\M (\C)$ for $\Para_{\int W} (\C)$, and call this \textbf{weighted parametrisation} \cite{bruno_thesis}.

Dually, a \textbf{coparametrised morphism} $f : X \to Y$ is a pair of an object $M : \M$ and a morphism $f : X \to M \bullet Y$. There is a category $\CoPara_\bullet (\C)$ of objects, coparametrised morphisms and reparametrisations.

Given a category $\C$ enriched in a monoidal category $\M$, an \textbf{externally parametrised morphism} $f : X \to Y$ of $\C$ is a pair of an object $M$ of $\M$ and a morphism $f : M \to [X, Y]$ of $\M$ \cite{toby-thesis}. There is once again a bicategory $\Para_\M (\C)$ of externally parametrised morphisms. In the case of a tensored enrichment this bicategory is equivalent to the previous one, but there are also interesting cases when they differ. Coparametrised morphisms cannot be defined for an enrichment that is not tensored.

\subsection{Optics}

\begin{wrapfigure}{r}{0.23\textwidth}
   \sctikzfig{figs/optic_composition}
	\caption{Alternative notations for optic composition}
	\label{fig:optic_composition}
\end{wrapfigure}

Given a monoidal category $\M$ acting on categories $\C$ and $\D$, and given objects $X, Y$ of $\C$ and $X', Y'$ of $\D$, a \textbf{mixed optic} $\binom{X}{X'} \to \binom{Y}{Y'}$ is an equivalence class of triples of an object $M : \M$, a coparametrised morphism $f : X \to M \bullet Y$ of $\C$ and a parametrised morphism $f' : M \bullet Y' \to X'$ of $\D$. The equivalence classes are generated by reparametrisations and satisfy the universal property of a coend, $\int^{M : \M} \C (X, M \bullet Y) \times \D (M \bullet Y', X')$. There are two different string diagram notations for an optic (figure~\ref{fig:optic_composition}). The first considers them as morphisms of a monoidal category, composing left-to-right, with causality flowing clockwise from top-left. The second considers them as colours of an operad, composing outside-in, with causality flowing left-to-right.

There is a category $\Optic_\M (\C, \D)$ whose objects are pairs and whose morphisms are optics. When both actions are symmetric, or equivalently are defined by a span of symmetric monoidal functors $\C \leftarrow \M \to \D$, then $\Optic_\M (\C, \D)$ is a symmetric monoidal category, with the tensor product on objects being pairwise. 

In the common case that $\M = \C = \D$ acts on itself by monoidal product, we write $\Optic (\C)$. The tensor product of $\Optic (\C)$ is pairwise monoidal product. When the monoidal unit of $\C$ is terminal (which includes all Markov categories) then we have natural isomorphisms $\Optic (\C) \left( I, \binom{X}{X'} \right) \cong \C (I, X)$ and $\Optic (\C) \left( \binom{X}{X'}, I \right) \cong \C (X, X')$. We call morphisms in latter case \textbf{continuations}, and define the representable functor $\K = \Optic (\C) (-, I) : \Optic (\C)^\op \to \Set$.

There are two common cases when the coend in the definition of optics can be eliminated using the ninja Yoneda lemma \cite{riley_optics,fosco_book}. Firstly, when $\M = \C = \D$ acts on itself by cartesian product then there is a natural isomorphism $\Optic \left( \binom{X}{X'}, \binom{Y}{Y'} \right) \cong \C (X, Y) \times \C (X \times Y', X')$. This is usually known as a \textbf{Lens}. Although this case is much easier to understand, there are significant conceptual advantages to the more general definition \cite{bruno_space_time_tradeoffs}. Secondly, when $\M = \C = \D$ acts on itself by a closed monoidal product then there is a natural isomorphism $\Optic \left( \binom{X}{X'}, \binom{Y}{Y'} \right) \cong \C (X, Y \otimes [Y', X'])$. Both of these cases can be generalised to requiring a condition on only one side.

For the cartesian self-action of $\Set$, $\Optic (\Set)$ coincides with the category of monomial endofunctors (those of the form $F (X) = A \times X^B$) and natural transformations. Any cartesian self-action in a category with finite limits can be generalised to \textbf{dependent lenses} (also known as morphisms of \textbf{containers} \cite{categories_containers}), which in the locally cartesian closed case are equivalent to \textbf{polynomial endofunctors} \cite{poly_book}. Finding the best of both worlds between the monoidal and cartesian cases is known as \textbf{dependent optics} \cite{vertechi_dependent_optics} and is only partially understood. There are reasons to want to use dependent optics in this paper because it is common that the available actions of a reinforcement learning agent depends on the current state of the Markov chain \cite{botta_etal_sequential_decision_problems}, but we only consider the simply-typed case in this paper for simplicity.

\subsection{Parametrised optics}

When a category of optics is symmetric monoidal, it admits a self-action. In \cite{towards-foundations} it was identified that the resulting category $\Para (\Optic)$ of \textbf{parametrised optics} is extremely rich, and provides a general-purpose foundation for the study of controlled processes. The study of this is known as \textbf{categorical cybernetics}, which includes compositional game theory \cite{davidad-dioptics}, deep learning \cite{foundations-gradient-learning}, compositional Bayesian inference \cite{compositional-bayesian} and variational learning \cite{toby-thesis}, and applications in software engineering such as \emph{open servers} \cite{open-servers}.

\section{States, contexts and iteration}

An optic (whether parametrised or not) is a process consisting of a forward pass followed by a backward pass. In many applications, including those in this paper, this process is iterated through repeated interaction with an outside environment. In the case of supervised learning, this could simply be samples drawn from a dataset. In this section we will develop a general theory of iterated optics.

\begin{wrapfigure}{r}{0.23\textwidth}
   \sctikzfig{figs/weighted_para}
	\caption{Morphism in $\Para^W (\C)$ (right) and its equivalence class in $\pi_0^*(\Para^W (\C))$ (left).}
	\label{fig:weighted_para}
\end{wrapfigure}

Let $\C$ be a symmetric monoidal category and $W : \C \to \Set$ a symmetric lax monoidal functor. Consider the bicategory $\Para^W (\C)$ generated by the action of $\int W$ on $\C$ given by $(M, w) \bullet X = M \otimes X$. A parametrised morphism $X \to Y$ of $\C$ weighted by $W$ consists of a morphism $M \otimes X \to Y$ together with an element $w \in W (M)$, as depicted in figure~\ref{fig:weighted_para}(right).

Any bicategory can be turned into a 1-category by change of enrichment basis along the connected components functor $\pi_0 : \Cat \to \Set$. This operation quotients together 1-cells that are related by any 2-cell. ($\pi_0$ is right adjoint to the free functor $\Set \to \Cat$, and it is more common to change basis along the left adjoint, which instead quotients out only invertible 2-cells.) The 1-category $\pi_0^* (\Para^W (\C))$ has morphisms that are equivalence classes identifying all ways of making the cut in figure~\ref{fig:weighted_para}(right). This satisfies an important universal property: it is the symmetric monoidal category that results from freely extending $\C$ with a state $w : I \to X$ for each element $w \in W (X)$, for all objects $X$ \cite{hermida_tennent_monoidal_indeterminates}\footnote{Thanks to Nathan Corbyn for bringing this reference to our attention.}.

Let $\C$ be a symmetric monoidal category. We define a symmetric lax monoidal functor called the \textbf{iteration functor}, ${\I : \Optic (\C) \to \Set}$ \cite{iteration_optics}. On objects, we set
\[ \I \binom{X}{X'} = \int^{M : \C} \C (I, M \otimes X) \times \C (M \otimes X', M \otimes X) \]
Given a representative element $(M, x_0, i) \in \I \binom{X}{X'}$ we call $M$ the \textbf{state space}, $x_0 : I \to M \otimes X$ the \textbf{initial state} and $i : M \otimes X' \to M \otimes X$ the \textbf{iterator}.

Given an optic $f = (N, f, f') : \binom{X}{X'} \to \binom{Y}{Y'}$ in $\Optic (\C)$, we get a function $\I (f) : \I \binom{X}{X'} \to \I \binom{Y}{Y'}$ given by taking $(M, x_0, i)$ to the state space $M \otimes N$, the initial state $I \overset{x_0}\longrightarrow M \otimes X \xrightarrow{M \otimes f} M \otimes N \otimes Y$, and the iterator
\[ M \otimes N \otimes Y' \xrightarrow{M \otimes f'} M \otimes X' \overset{i}\longrightarrow M \otimes X \xrightarrow{M \otimes f} M \otimes N \otimes Y \]
This can be easily checked to be functorial and well-defined (see Appendix \ref{appendix} for proofs).

\begin{proposition}\label{prop:well-defined_iterator}
	The iterator $\I:\Optic(\C)\to\Set$ is well-defined.
\end{proposition}

\begin{proposition}\label{prop:functorial_iterator}
	The iterator $\I:\Optic(\C) \to \Set$ is functorial.
\end{proposition}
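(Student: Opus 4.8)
The plan is to verify the two functor axioms—preservation of identities and of composition—after first discharging the hidden obligation that the assignment $f \mapsto \I(f)$ is even well-defined. Throughout I would invoke Mac Lane's coherence theorem to treat $\C$ as strict monoidal, so that the unitors and associators vanish; reinstating them costs only bookkeeping and nothing conceptual, since each coherence isomorphism $M \otimes I \cong M$ or $(M \otimes N) \otimes P \cong M \otimes (N \otimes P)$ is itself a reparametrisation and hence collapses to an identity inside the coend $\int^{M}$ defining $\I$.

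The first and most delicate task is well-definedness, which has two parts because both the elements of $\I\binom{X}{X'}$ and the optic $f$ are coend equivalence classes. (i) The function $\I(f)$ must respect the coend relation in its domain: if $h : M \to M'$ is a reparametrisation relating two representatives of an element of $\I\binom{X}{X'}$, then applying $\I(f)$ to each yields two elements of $\I\binom{Y}{Y'}$ related by $h \otimes N$, because whiskering by the residual $N$ commutes with whiskering by $h$. (ii) The value $\I(f)$ must be independent of the chosen representative $(N, f, f')$ of the optic: an optic reparametrisation $k : N \to N'$ sends one representative to another, and a direct computation shows that $\I$ carries these to two elements whose state spaces are $M \otimes N$ and $M \otimes N'$ and which are related by the reparametrisation $M \otimes k$, witnessed by the pair $\bigl((M \otimes f) \circ x_0,\ (M \otimes f) \circ i \circ (M \otimes f')\bigr)$. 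Both checks reduce entirely to the interchange law and bifunctoriality of $\otimes$.

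Identity preservation is then immediate: the identity optic has residual $I$ with its two legs the unitors, so in the strict setting $\I(\id)$ sends $(M, x_0, i)$ to $(M \otimes I, x_0, i) = (M, x_0, i)$. For composition, given $f = (N, f, f')$ and $g = (P, g, g')$ I would write out the composite optic $g \circ f$, with residual $N \otimes P$, forward leg $(N \otimes g) \circ f$ and backward leg $f' \circ (N \otimes g')$, and compute $\I(g \circ f)(M, x_0, i)$ directly from the defining formula. Expanding $\I(g)\bigl(\I(f)(M, x_0, i)\bigr)$ separately gives a residual $(M \otimes N) \otimes P$ and legs built from $(M \otimes N) \otimes g$ and $(M \otimes N) \otimes g'$; matching term by term, each identity is an instance of $(M \otimes N) \otimes g = M \otimes (N \otimes g)$ together with functoriality of $M \otimes -$, while the two residuals agree up to the associator absorbed into the coend. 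Hence $\I(g \circ f) = \I(g) \circ \I(f)$.

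The main obstacle I anticipate is precisely step (i)–(ii): one must check honestly that a reparametrisation in the source of $\I$ and an optic reparametrisation in the representative of $f$ both push forward to \emph{legitimate} coend relations in the target $\I\binom{Y}{Y'}$—that is, that there is an actual witnessing element and reparametrising map in $\C$, not merely data that happens to agree on the nose. Once that is pinned down, the identity and composition axioms are routine diagram chases driven solely by bifunctoriality and interchange.
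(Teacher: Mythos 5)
Your proposal is correct and follows essentially the same route as the paper's proof: identity and composition preservation are verified by direct computation on representatives, with the unitors and associators absorbed into the coend defining $\I$. The only difference is that you additionally discharge the well-definedness obligations (invariance of $\I(f)$ under reparametrisation of both the domain element and the chosen representative of the optic $f$), which the paper's proof leaves implicit; this is a worthwhile extra check but does not change the substance of the argument.
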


When $\C = \Set$ and similar cases, given an element $i = (M, (m_0, x_0), i) \in \I \binom{X}{X'}$ and a function $k : X \to X'$, we can define an infinite sequence $\braket{k | i} : X^\omega$ by the corecursive formula
\[ \braket{k | M, (m_0, x_0), i} = x_0 : \braket{k | M, i (m_0, k (x_0)), i} \]
This defines a dinatural transformation $\braket{- | -} : \mathbb K \binom{X}{X'} \times \I \binom{X}{X'} \to X^\omega$ which is well-defined.

\begin{proposition}\label{prop:well-defined_streams}
	The map $\braket{- \mid -} : \mathbb K \binom{X}{X'} \times \I \binom{X}{X'} \to X^\omega$ is a dinatural transformation when $\C=\Set$.
\end{proposition}

In the general case, we believe this can be accomplished using the machinery of monoidal streams \cite{monoidal-streams}.

We also have an evident laxator $\nabla : \I \binom{X}{X'} \times \I \binom{Y}{Y'} \to \I \binom{X \otimes Y}{X' \otimes Y'}$ defined up to symmetries by
 \[(M, x_0, i) \nabla (M', x_0', i') = (M \otimes M', x_0 \otimes x_0', i_0 \otimes i_0') \]
The resulting symmetric monoidal category $\Optic^\I (\C) = \pi_0^* \left( \Para^\I (\Optic (\C)) \right)$ extends $\Optic (\C)$ with states $I \to \binom{X}{X'}$ defined by elements of $\I \binom{X}{X'}$. A typical morphism is depicted in figure ~\ref{fig:iterated_optic}.

Given a symmetric monoidal category $\C$, a \textbf{comorphism} $X \to X'$, which could also be called a \textbf{context} for morphisms $X \to X'$, is a state $I \to \binom{X}{X'}$ of $\Optic (\C)$. When $\C$ is itself a category of optics, this is known as \textbf{double optics} and is a central idea of Bayesian open games \cite{bayesian_open_games}. These can be depicted as combs with 1 hole and bidirectional wires, or combs with 2 holes and only forwards wires.

Given a symmetric monoidal category $\C$, a functor $\Optic (\C) \to \Set$ can be equivalently defined as a \textbf{Tambara comodule}: a functor $W : \C \times \C^\op \to \Set$ (note the variance) equipped with a natural family of functions $W (M \otimes X, M \otimes Y) \to W (X, Y)$. This is a dualisation of the fundamental theorem of Tambara theory \cite{pastro-street,categorical_update}. Given this data, a generalised comorphism $X \to X'$ can be defined as an element of $W \binom{X}{X'}$, that is, a state of $\Optic^W (\C) = \pi_0^* \left( \Para^W (\Optic (\C)) \right)$. This construction appears in Sec.9 of \cite{game_semantics_game_theory}.\footnote{The first author has been working on the theory of generalised contexts for several years, but has yet to find a compelling application outside of categorical cybernetics.}

Putting this together, we can define an \textbf{iteration context} for optics $\binom{X}{X'} \to \binom{Y}{Y'}$ as a (representable) state of $\Optic (\Optic^\I (\C))$. This defines a functor $\I_{\textrm{env}} : \Optic^2 (\C) \to \Set$ depicted in figure~\ref{fig:Iteration_environments}(top), and by pulling the state variable of $i$ through $k$ we can define it equivalently by a coend over $\C$ rather than over $\Optic^\I (\C)$:
\[ \I_{\textrm{env}} \left( \binom{X}{X'}, \binom{Y}{Y'} \right) \cong \int^{M, M' : \C} \C (I, M \otimes X) \times \C (M \otimes Y, M' \otimes Y') \times \C (M' \otimes X', M \otimes X) \]
This can be equivalently depicted as a 3-hole comb in figure~\ref{fig:Iteration_environments}(bottom), and we can unroll this $n$ steps to produce a $2n$-hole comb, including an $\omega$-comb \cite{monoidal-streams} for the limiting case.

\begin{wrapfigure}{R}{0.25\textwidth}
	\sctikzfig{figs/optic_context}
	\sctikzfig{figs/3_hole_comb}
	\caption{Iteration contexts as states of $\Optic^2(\C)$ (top) and 3-hole combs (bottom).}
	\label{fig:Iteration_environments}
\end{wrapfigure}

\section{Bellman operators}\label{sec:Bellman_operators}

A \emph{Bellman operator} is a self-mapping of a function space of either state-value functions or state-action-value functions, which iteratively improves the estimation of values. For dynamic programming, the most basic Bellman operator $\B_\pi = \Bval (-, \pi) : \R^S \to \R^S$ is defined by
\[ \B_\pi (V) (s) = \mathbb E_{(r, s') \sim t (s, \pi (s))} [r + \gamma V (s')] \]
The functional equation $V = \B_\pi (V)$ is called a \emph{Bellman equation}, and its solution $V$ characterises the long-run values of the policy $\pi$. Provided $S$ is finite and $0 < \gamma < 1$, $\B_\pi$ is a contraction mapping on the supremum metric of $\R^S$, and therefore iterating $\B_\pi$ from any initial estimate of $V$ will converge to the unique solution of the Bellman equation.

In \cite{value-iteration-optic-composition} we showed that $\B_\pi$ has the form $\B_\pi = \K (\ell_\pi)$, where $\ell_\pi = \binom{\ell_f}{\ell_b} : \binom{S}{\R} \to \binom{S}{\R}$ is the mixed optic in the category $\Optic_{\mathrm{Kl} (D)}(\mathrm{Kl} (D), \mathrm{EM} (D))$ defined by $\ell_f: S \to D(S\times \R)$ which maps $s \mapsto t(s,\pi(s))$ and $\ell_b: D(\R) \times \R \to \R$ which maps $r, v \mapsto \mathbb E [r] + \gamma v$.
% \begin{align*}
%     \ell_f: S &\to D(S\times \R)   & \ell_b: D(\R) \times \R &\to \R \\
%         s &\mapsto t(s,\pi(s))      & r, v &\mapsto \mathbb E [r] + \gamma v
% \end{align*}
Here $D$ is the finite support probability monad on $\Set$, whose Eilenberg-Moore category is convex sets \cite{fritz_convex_spaces}, with $\mathrm{Kl} (D)$ acting on $\mathrm{EM} (D)$ via the embedding of free algebras as algebras $D : \mathrm{Kl} (D) \hookrightarrow \mathrm{EM} (D)$, which is a strong monoidal functor.

That is to say, if we represent a value function $V : S \to \R$ by a costate of optics $V : \binom{S}{\R} \to \binom{1}{1}$, then the costate $V \circ \mathcal B_\pi : \binom{S}{\R} \to \binom{1}{1}$ similarly represents $\B (V) : S \to \R$.
This is a refinement of the usual view of Bellman operators as endomorphisms of function spaces.

Strictly speaking it would be preferable to use a category of optics whose forward objects are finite sets and backward objects are complete metric spaces, using a suitable probability monad such as Kantorovich \cite{fritz_perrone_probability_monad_colimit}, in order to guarantee convergence in all cases. However we leave the details of this for future work.
For the general theory of RL one can work with Markov categories \cite{Synthetic_approach}, and especially representable Markov categories \cite{representable_markov_categories} to handle the interplay between \emph{distributions} and \emph{samples}.

This type of Bellman operator updates an entire value function at once, as is most common in basic dynamic programming. However in reinforcement learning it is far more common to update a Q-matrix one entry at a time, with a sample determining which state-action pair is to have its value updated. Bellman operators of that form do not directly factor through $\K$. However we can fix this with a small change in perspective: we consider Bellman operators that return a \emph{delta} or \emph{change} to a Q-matrix, apply $\K$ to that, and then apply the change to obtain a new Q-matrix in the world of functions rather than optics. This same change of perspective is required anyway to describe deep RL where the delta is replaced with a cotangent vector, so it is an interesting observation that the category theory suggests the same for the more discrete setting of tabular RL.

One may ask whether $\Bpol$ arises also as the image under $\K$ of a some optic, and the answer is negative.
This was an ambiguous point in \cite{value-iteration-optic-composition} where the policy improvement step could not be stated as an optic, even though both $\Bval$ and $\Bpol$ are treated as similar contraction operators in the dynamic programming literature. We can now give a more explicit answer; $\Bpol$ involves two currying maps ($\lambda$), so it is not in the image of an optic under the continuation functor (see figure \ref{fig:VIP_PIP_in_Set}).

\begin{figure}[ht]
	\begin{minipage}{0.5\textwidth}
        \sctikzfig{figs/VIP_in_LSet}
	\end{minipage}\begin{minipage}{0.5\textwidth}
        \sctikzfig{figs/PIP_in_LSet}
	\end{minipage}
	\caption{$\Bval$ (left) and $\Bpol$ (right) in the free autonomisation of $\Set$ \cite{delpeuch_free_automization} (see also remark~\ref{rem:Autonomisation}).}
	\label{fig:VIP_PIP_in_Set}
\end{figure}

\begin{remark}\label{rem:Autonomisation}
	The free autonomisation of $(\Set,\times,1)$, denoted $(L\Set,\otimes,I)$, consists of formal string diagrams annotated with functions in $\Set$ \cite{delpeuch_free_automization}.
	The autonomous structure for the monoidal product $\otimes$ and the cartesian structure for the cartesian product $\times$ being different prevents $L\Set$ from having all objects be isomorphic to the monoidal unit.
	Forward morphisms are embeddings via the strong monoidal fully faithful functor $F:\Set\to L\Set$.
	Cups $\varepsilon:A\otimes A^*\to I$ and caps $\eta:I\to A^*\otimes A$ are freely added, and allow to draw dual objects $A^*$ as backward wires.
	The diagrammatic notation for $L\Set$ in Figure~\ref{fig:VIP_PIP_in_Set} uses the isomorphism $\lambda:A^*\otimes B \cong B^A$ and the cap $\eta:I\to A^*\otimes A$. Morphisms in $\Set$ like $\mathrm{ev}:B^A\times A \to B$ embed again via $F$.
\end{remark}

\subsection{Parametric Bellman operators}

Since the category $\Lens = \Optic (\Set)$ is enriched in $\Set$, we can form its category $\Para_\Set (\Lens)$ of externally parametrised morphisms. A morphism $\binom{X}{X'} \to \binom{Y}{Y'}$ of this category consists of a parameter set $P$, a forwards pass function $P \times X \to Y$ and a backwards pass function $P \times X \times Y' \to X'$.

The simplest Bellman operator for RL, the one for SARSA, is a morphism in this category of type $\B : \binom{1}{S \times A \times \R} \to \binom{S \times A}{\R}$,
with parameter set $\ret = S \times A \times \R \times S \times A$, where the backward pass function is $r, v \mapsto r + \gamma v$ (figure~\ref{fig:SARSA_lens}).

\begin{wrapfigure}{R}{0.35\textwidth}
   \sctikzfig{figs/SARSA_lens2}
	\caption{Target computation in SARSA as a parametrised lens, and its $\K$-image in $\Set$.}
	\label{fig:SARSA_lens}
\end{wrapfigure}

We lift the functor $\K : \Lens^\op \to \Set$ to the functor $\Para_\Set (\K) : \Para_\Set (\Lens^\op) \to \Para_\Set (\Set)$.
Applying this functor to $\B$ results in a function
\begin{align*}
	\Para_\Set (\K) (\B) : \ret \times \R^{S \times A} &\to S \times A \times \R \\
	((s, a, r, s', a'), Q) &\mapsto (s, a, r + \gamma Q (s', a'))
\end{align*}

Here we very informally think of $S \times A \times \R$ as a `discrete cotangent vector' at $Q \in \R^{S \times A}$. Slightly more precisely, the actual cotangent space is $\R^{S \times A}$, with $S \times A \times \R$ representing a scaled basis vector via the embedding $S \times A \times \R \hookrightarrow \R^{S \times A}$,
$(s, a, r) \mapsto (s', a') \mapsto \begin{cases}
	r &\text{ if } s = s', a = a' \\
	0 &\text{ otherwise}
\end{cases} $.
This differential geometry perspective is heavily inspired by Myers' categorical systems theory \cite{david-jaz-book} and related unpublished work in progress of Capucci, and we leave it for future work to make its application to reinforcement learning precise.

We do not provide a general definition of Bellman operators, and consider this a representative `definition by example'. In general, Bellman operators will be optics from value functions to deltas of value functions, parametrised by a \emph{sample} which is data received from the environment.

%\subsection{Parametric Bellman operators}
%
%The contraction result motivates the definition of a parametric Bellman operator, whose \emph{raison d'\^etre} is the inclusion of experiential data in the learning process. This sampling data $\ret:\Set$ provides a moving target for the model improvement.

In methods where the sample requires the operator to make use of the continuation only once, this parametric operator can be represented as a lens.
SARSA 
is an example of this, where the usage of $Q$ by the target is \emph{linear} in the sense of linear type theory.
% [cite]: the function\footnote{We write $SARSA$ in place of $S\times A\times R\times S\times A$ for readability.} $G:SARSA\times R^{S\times A}\to R^{S\times A}$ is the $\K$-image of the parametrised lens $\Para(\Optic(\Set)):\left( \binom{SARSA}{1}\otimes \binom{S\times A}{R},\binom{S\times A}{R} \right)$.
Setting aside the convergence properties of the Bellman operator, we treat it from this point on as a morphism $G:\ret\times \R^S\to \R^S$ in $\Set$.
This morphism becomes a central part of our formalization of an RL model, explained next.

%\begin{wrapfigure}{R}{0.35\textwidth}
%    \sctikzfig{figs/SARSA_lens}
%	\caption{Target computation in SARSA as a parametrised lens, and its $\K$-image in $\Set$.}
%	\label{fig:SARSA_lens}
%\end{wrapfigure}

\section{Models, agents and environments}

A model for an RL method contains the data to generate the policy from certain inner parameters and the data to update those parameters based on bootstrapping and/or samples.
It matches the structure of a lens from model parameters to agent interface, which we annotate in figure~\ref{fig:RL_models}(a).

The forward map uses parameters of the method to generate a policy for the agent's interaction with the environment.
In Q-learning for example, which is a TD method (figure~\ref{fig:RL_models}(b)), $P:\R^{S\times A}\to (\mathcal{P}A)^S$ takes the current Q-table $Q:\R^{S\times A}$ and returns the greedy policy $\pi:(\mathcal{P}A)^S$ defined by $\pi(s)=\argmax_a Q(s,a)$.

The backward map takes the return from the agent and the current parameters to generate an update target as a (often discrete) cotangent vector
for the parameters.
The return usually takes the form of some product of types $S$ (states), $A$ (actions) and $\R$ (rewards).
In SARSA (figure~\ref{fig:RL_models}(b)), $G$ takes a sample $(s,a,r,s',a')$ (right input) and the bootstrapped Q-table (upper input) to calculate the target $r+\gamma Q(s',a')$, which is the direction of the cotangent vector at $(s,a)$.
In DP (figure~\ref{fig:RL_models}(d)), there is no return from the agent, and the update target is the output of the Bellman operator $\Bval$ as a section of the cotangent bundle: for every state $s\in S$, $\Bval(V)(s)$ defines the direction that $V (s)$ must change to.

Certain DP methods like GPI or asynchronous DP \cite{parallel_computation} benefit from treating the two Bellman operators as separate backward morphisms (figure~\ref{fig:RL_models}(e)).

This also captures some deep reinforcement learning methods, which use a gradient-based approximation of the value functions instead of a tabular approach.
An example is actor-critic methods \cite{actor_critic}.
We will not give a complete motivation of their definition, but in short, the traditional policy is a neural network called the \emph{actor}, and is complemented by an additional function called the \emph{critic} that learns a baseline value function. % in order to reduce the high sample variance of model-free algorithms like REINFORCE.
The actor $P:\Theta\to (S\to DA)$ and the critic $V:\Omega\to (S\to \R)$ appear in the forward map of the diagram (figure~\ref{fig:RL_models}(f)), but only the actor outputs to the right interface, as the critic does not act on the environment.
Being a deep RL method, the actor map $P:\theta\mapsto \pi_\theta$ is a neural network, and the associated backward loss map $\mathcal{L}_\mathrm{ac}:\Theta\times\Omega\times SA\R S\to \Delta\Theta$ is defined by the improvement of expected return $(\theta,\omega,s,a,r,\_)\mapsto (r-V_\omega(s))\nabla_\theta\log\pi_\theta(s,a)$ with the baseline given by $V_\omega$.
The critic map $V:\omega\mapsto V_\omega$ has as the backward loss map $\mathcal{L}_\mathrm{cr}:\Omega\times SA\R S\to\Delta\Omega$ the reduction of policy update variance $(\omega,s,a,r,s')\mapsto r + \gamma V_\omega(s')-V_\omega(s)$.

\begin{figure}[ht]
	\begin{minipage}{0.33\textwidth}
	    \sctikzfig{figs/generic_model}
	\end{minipage}\begin{minipage}{0.33\textwidth}
	    \sctikzfig{figs/TD_model}
	\end{minipage}\begin{minipage}{0.33\textwidth}
		\sctikzfig{figs/MC_model}
	\end{minipage} \\
	\begin{minipage}{0.33\textwidth}
	    \sctikzfig{figs/DP_model}
	\end{minipage}\begin{minipage}{0.33\textwidth}
	    \sctikzfig{figs/ADP_model}
	\end{minipage}\begin{minipage}{0.33\textwidth}
		\sctikzfig{figs/AC_model}
	\end{minipage}
	\caption{Lenses for $(a)$ a generic RL model, $(b)$ TD, $(c)$ MC, $(d)$ DP, $(e)$ GPI and $(f)$ Actor-Critic methods. The drawing of the backward map as a morphism with an input from the top is merely a stylistic choice, where it should be understood as a morphism with two inputs, the bootstrap and the sample.}
	\label{fig:RL_models}
\end{figure}
% The Bellman operator $\Bval:R^S\to R^S$ is separated into the backwards map $\R^S \to T^*(R^S)$, which is a *section of the cotangent bundle*, and the iterator $T^*(R^S) \to R^S$. 

This generic model lens, and any of the variants above, embed into $\Optic(\C)$ which is extended to $\Optic^\I(\C)$ by an iteration functor $\I$ defined next.
The left interface to the model optic is closed by two pieces of data: An initial state $q_0:I\to M\otimes \Theta$ and an update rule $i:M\otimes \Delta\Theta\to M\otimes \Theta$ that acts as the iterator.
The bootstrapping type $M$ is usually the whole parameter space $\Theta$, but the optic iteration functor allows $M$ to be any state space, e.g. a subset of $\Theta$ or an alternative encoding of it.
In gradient-free methods, the update is generally pointwise as in \eqref{eq:model_update}.
Conversely, gradient-based methods use neural network optimizers like stochastic gradient descent, Adam and other variations as the update rule \cite{foundations-gradient-learning}.

%The tuple $(M,q_0,i)$ is a witness element of $\I\binom{\Theta}{T^*_sS}$, whose coend definition makes it \emph{implementation-independent}; how the state space $M$ is encoded in a RL method does not affect the structural description of the method, as long as it provides the interface to interact with it via $q_0$ and $i$.
%In the case of tabular, gradient-free methods, this state space is usually encoded as a $V$ or $Q$ table.

The right interface of a model, which is a morphism $\binom{S}{I} \to \binom{A}{F}$ in $\Para(\Optic(\C))$, parametrises an agent. This parametrised morphism will itself interact with an environment that is an iteration context.
The coend in the environment is taken over states of the Markov chain. %hidden states, which in the case of POMDPs cannot be simplified.
%However, when the environment is taken to be a fully observable MDP, $M=S$ and $M'=S'$, so $\I_{\textrm{MDP}}\left( \binom{S}{I}, \binom{A}{F} \right)\cong \C (I, S \otimes S) \times \C (S \otimes A, S' \otimes F) \times \C (S', S\otimes S)$ where the first and last factors are diagonal morphisms.

Offline methods, unlike online methods, interact with the agent only in a trivial way by showing it experiential samples. This is shown by the types $M=S\times A\times F$ and $M'=I$, by which the continuation ignores the agent's action and just projects the action and feedback as a response to the agent.
Moreover, the iterator type $\C(M'\otimes I, M \otimes S)\cong\C(I, M \otimes S)$ coincides with the initial state, which reflects the fact that the environment samples experiences $(s,a,f)$ from a distribution defined by a dataset (figure~\ref{fig:environments}(a,b)).

\begin{figure}[ht]
	\begin{minipage}{0.7\textwidth}
    \sctikzfig[.75]{figs/onpolicy_SARSA}
	\end{minipage}\begin{minipage}{0.3\textwidth}
    \sctikzfig[.75]{figs/offpolicy_Q-learning}
	\end{minipage}
	\caption{SARSA is on-policy (left two). Q-learning is off-policy (right).}
	\label{fig:onpolicy_offpolicy}
\end{figure}

To clarify the interplay between these the three structures described in this section, we look at the role played by internal and external policies in on- and off-policy methods.
First, figure~\ref{fig:onpolicy_offpolicy}(left) shows the full representation of SARSA. It consists of a model optic parametrising two copies of an agent that are composed with a 2-hole environment. The policy evaluated by both instances is the same, and the return to the model consists of $(s,a,r)$ from the first agent optic and $(s',a')$ from the second. SARSA is an on-policy method, as the policy deployed to obtain $a'\sim \pi(s')$ is the same as the one used to compute the first action $a\sim \pi(s)$. Calculating the target $G$ from the sample $(s,a,r,s',a')$ is equivalent to calculating $G$ from $(s,a,r,s')$ and its internal policy $\pi$, even though the model does not know the environment's dynamics $k$.
This is why the same method can be equivalently specified by the middle diagram.

On the other hand, Q-learning (figure ~\ref{fig:onpolicy_offpolicy}(right)) is an off-policy method, because the last action is computed by an internal policy $\pi'=\argmax$ different from the one being deployed.

\begin{figure}[ht]
	\begin{minipage}{0.25\textwidth}
        \sctikzfig{figs/online_environment}
	\end{minipage}\begin{minipage}{0.30\textwidth}
        \sctikzfig{figs/offline_environment}
	\end{minipage}\begin{minipage}{0.25\textwidth}
        \sctikzfig{figs/contextual_bandit_environment}
	\end{minipage}\begin{minipage}{0.20\textwidth}
        \sctikzfig{figs/multiarmed_bandit_environment}
	\end{minipage}
	\caption{Online (a) and offline (b) RL environments. Contextual (c) and multi-armed (d) bandit environments. Omitted arrows are the unit. The offline continuation is a projection $p$ of $A\times F$.}
	\label{fig:environments}
\end{figure}

\subsection{Prediction and bandit problems} \label{sec:prediction_bandit}

The presented framework handles RL \textbf{prediction} problems for free in all the previous methods by trivialising the set $A=1$, which pinpoints the idea that \emph{a prediction algorithm is a control algorithm where there's no choice of actions}.
For example, MC prediction of the long-term value of states from $n$-long episodes becomes an optic $\binom{\R^S}{T^*_sS}\to\binom{I}{S\R^n}$, and $1$-TD prediction becomes $\binom{\R^S}{T^*_sS}\to\binom{I}{S\R S}$. The forward maps for both are trivial since the agent has no policy to execute, perhaps better called \textbf{observer} rather than agent here.
The corresponding environments have the type of a MRP.

Moreover, \textbf{bandit problems} emerge by trivialising $M'=I$ (figure~\ref{fig:environments}(c,d)).
In particular, contextual bandits involve finding the best action in $A$ associated to a particular state in $M$ for which only partial information of type $S$ is given, yielding feedback in $F$.
This action does not affect further distributions of states, so the object between the continuation and the update rule is trivial.
Multi-armed bandit problems are a further special case, characterized by environments  whose only non-trivial morphism is the continuation $k:A\to F$.

\bibliographystyle{eptcsalpha}
\bibliography{rl-in-cybcat}

\appendix
\section{Appendix}\label{appendix}

\begin{proof}[Proof (\ref{prop:well-defined_iterator})]
	We have to prove that the following function is well-defined:
	\begin{equation}
		\Optic\left(\binom{X}{X'},\binom{Y}{Y'}\right) \to \left[ \I\binom{X}{X'} \to \I\binom{Y}{Y'} \right] \label{eq:it_well_def}
	\end{equation}
	Following Riley's proof method of sequential composition of optics \cite{riley_optics}, the uncurried form of the above function has as domain:
	\begin{align*}
		&\left(\int^M \C(X,M\otimes Y)\times \C(M\otimes Y',X')\right) \times \left(\int^N \C(I,N\otimes X)\times\C(N\otimes X',N\otimes X)\right) \\
		\cong &\int^{M,N} \C(X,M\otimes Y)\times \C(M\otimes Y',X') \times \C(I,N\otimes X)\times\C(N\otimes X',N\otimes X) \tag{coend-Fubini} \\
	\end{align*}
	By the universal property of coends it suffices to construct maps natural in $M$ and $N$ into the codomain of \eqref{eq:it_well_def}:
	\begin{align*}
		& \C(X,M\otimes Y)\times \C(M\otimes Y',X') \times \C(I,N\otimes X)\times\C(N\otimes X',N\otimes X) \\
		\to & \C(I,N\otimes M\otimes Y)\times \C(N\otimes M\otimes Y',N\otimes M\otimes Y) \tag{composition in $\C$} \\
		\to &\int^P \C(I,P\otimes X)\times\C(P\otimes X',P\otimes X) \tag{$\mathrm{copr}_{N\otimes M}$}
	\end{align*}
	where the first map takes morphisms $f,f',x_0,i$ to
	\begin{align*}
		& I \xrightarrow{x_0} N\otimes X \xrightarrow{N\otimes f} N\otimes M\otimes Y \\
		& N\otimes M\otimes Y' \xrightarrow{N\otimes f'} N\otimes X \xrightarrow{i} N\otimes X \xrightarrow{N\otimes f} N\otimes M\otimes Y
	\end{align*}
	This also admits a graphical representation, depicted in Figure~\ref{fig:well-defined_iterator}.
\end{proof}

\begin{figure}[ht]
	\sctikzfig{figs/well-defined_iterator}
	\caption{Composition of an optic with an iterator yields another iterator.}
	\label{fig:well-defined_iterator}
\end{figure}

\begin{proof}[Proof (\ref{prop:functorial_iterator})]
	Let $f = (N, f, f') : \binom{X}{X'} \to \binom{Y}{Y'}$ and $g = (P, g, g') : \binom{Y}{Y'} \to \binom{Z}{Z'}$ be two morphisms in $\Optic(\C)$. Preservation of identity is shown by:
	\[ \I(I,1_X,1_{X'}):(M,x_0,i)\mapsto (M\otimes I,x_0;(I\otimes 1_X),(M\otimes 1_{X'});i;(M\otimes 1_X)) \]
	Preservation of composition is shown by the isomorphic images of $\I(N,f,f');\I(P,g,g')$, which maps $(M,x_0,i):\I \binom{X}{X'}$ to the state space $M\otimes N\otimes P$, the initial state $I \overset{x_0}\longrightarrow M \otimes X \xrightarrow{M \otimes f} M \otimes N \otimes Y\xrightarrow{M\otimes N\otimes g} M\otimes N\otimes P\otimes Z$  and the iterator
	\[ M\otimes N\otimes P \otimes Z' \xrightarrow{M\otimes N\otimes g'} M \otimes N \otimes Y' \xrightarrow{M \otimes f'} M \otimes X' \overset{i}\longrightarrow M \otimes X \xrightarrow{M \otimes f} M \otimes N \otimes Y \xrightarrow{M\otimes N\otimes g} M\otimes N\otimes P\otimes Z \]
	which defines the element in $\I \binom{Z}{Z'}$, and $\I(N\otimes P,(f;N\otimes g),(N\otimes g';f'))$, which maps $(M,x_0,i)$ to the same state space, the initial state $x_0;(M\otimes (f;N\otimes g))$, and the iterator $(M\otimes (N\otimes g');f');i;(M\otimes (f;N\otimes g))$.
\end{proof}

\begin{figure}[ht]
	\sctikzfig{figs/iterated_optic}
	\caption{Typical morphism in $\Optic^\I (\C) = \pi_0^* \left( \Para^\I (\Optic (\C)) \right)$ which consists of a morphism in $\Optic (\C)$ extended by a representative element $(M,x_0,i)\in \I \binom{X}{X}$. Adapted from \cite{iteration_optics}.}
	\label{fig:iterated_optic}
\end{figure}

\begin{proof}[Proof (\ref{prop:well-defined_streams})]
	Considering that $\Optic(\Set)\cong\Lens$, the domain and codomain functors of the transformation are $\K\times\I:\Lens^\op\times \Lens\to\Set$ and $\V^\omega:\Lens\to\Set$, where $\V^\omega$ is the forwards pass functor $\V:\Lens\to\Set$ followed by the stream functor $(-)^\omega:\Set\to\Set$.
	This being a purely covariant functor makes the dinaturality condition into the following pentagon identity for every lens $\lambda:{X\choose X'}\to{Y\choose Y'}$ whose forward and backward maps we denote $f$ and $f'$:
	% https://q.uiver.app/#q=WzAsNSxbMCwxLCJcXG1hdGhiYntLfXtZXFxjaG9vc2UgWSd9XFx0aW1lc1xcbWF0aGJie0l9e1hcXGNob29zZSBYJ30iXSxbMiwwLCJcXG1hdGhiYntLfXtYXFxjaG9vc2UgWCd9XFx0aW1lc1xcbWF0aGJie0l9e1hcXGNob29zZSBYJ30iXSxbNCwwLCJcXG1hdGhiYntWfV5cXG9tZWdhe1hcXGNob29zZSBYJ309WF5cXG9tZWdhIl0sWzIsMiwiXFxtYXRoYmJ7S317WVxcY2hvb3NlIFknfVxcdGltZXNcXG1hdGhiYntJfXtZXFxjaG9vc2UgWSd9Il0sWzQsMiwiXFxtYXRoYmJ7Vn1eXFxvbWVnYXtZXFxjaG9vc2UgWSd9PVleXFxvbWVnYSJdLFszLDQsIlxcbGFuZ2xlIC1cXG1pZCAtXFxyYW5nbGVfe1lcXGNob29zZSBZJ30iXSxbMSwyLCJcXGxhbmdsZS1cXG1pZC1cXHJhbmdsZV97WFxcY2hvb3NlIFgnfSJdLFswLDMsIlxcbWF0aGJie0t9e1lcXGNob29zZSBZJ31cXHRpbWVzXFxtYXRoYmJ7SX0oXFxsYW1iZGEpIiwyXSxbMCwxLCJcXG1hdGhiYntLfShcXGxhbWJkYSlcXHRpbWVzXFxtYXRoYmJ7SX17WFxcY2hvb3NlIFgnfSJdLFsyLDQsIlxcbWF0aGJie1Z9Xlxcb21lZ2EoXFxsYW1iZGEpPWZeXFxvbWVnYSJdXQ==
	\[\begin{tikzcd}[ampersand replacement=\&,cramped]
		\&\& {\mathbb{K}{X\choose X'}\times\mathbb{I}{X\choose X'}} \&\& {\mathbb{V}^\omega{X\choose X'}=X^\omega} \\
		{\mathbb{K}{Y\choose Y'}\times\mathbb{I}{X\choose X'}} \\
		\&\& {\mathbb{K}{Y\choose Y'}\times\mathbb{I}{Y\choose Y'}} \&\& {\mathbb{V}^\omega{Y\choose Y'}=Y^\omega}
		\arrow["{\langle-\mid-\rangle_{X\choose X'}}", from=1-3, to=1-5]
		\arrow["{\mathbb{V}^\omega(\lambda)=f^\omega}", from=1-5, to=3-5]
		\arrow["{\mathbb{K}(\lambda)\times\mathbb{I}{X\choose X'}}", from=2-1, to=1-3]
		\arrow["{\mathbb{K}{Y\choose Y'}\times\mathbb{I}(\lambda)}"', from=2-1, to=3-3]
		\arrow["{\langle -\mid -\rangle_{Y\choose Y'}}", from=3-3, to=3-5]
	\end{tikzcd}\]
	For $k,(M,(m_0,x_0),i)$ in $\K{Y\choose Y'}\times\I{X\choose X'}$, the diagram commutes when the streams $f^\omega \braket{\lambda;k\mid M,(m_0,x_0),i}$ and $\braket{k\mid M,(m_0,f(x_0)),j}$ are equal, where $j=(M\times f');i;(M\times f)$.
	We proceed by coinduction, showing that the streams have equal heads and tails; an equivalent perspective is that they are generated by bisimilar (in fact isomorphic) state machines.
	The heads of both streams is $f(x_0)$, and their tails operate as state machines for states $(m_0,x_0)$:
	\begin{align*}
		f^\omega\braket{\lambda;k\mid M, i(m_0,(\lambda;k)(x_0)),i} &= f(i(m_0,(\lambda;k)(x_0))_1) :: \braket{\cdots} \\
		\braket{k\mid M, j(m_0,k(f(x_0))), j} &= j(m_0,k(f(x_0)))_1 :: \braket{\cdots}
	\end{align*}
	Thus we show for all pairs $(m,x)$,
	\begin{align*}
		& f(i(m,(\lambda;k)(x))_1) \\
		= & f(i(m,(f;k;f')(x))_1) \tag{Composition of lens with continuation} \\
		= & ((M\times f');i;(M\times f))(m,k(f(x)))_1 \\
		= & j(m,k(f(x)))_1
	\end{align*}
	% proving that the tails' heads are equal and by universality of $(m,x)$, equivalence of the rest of the stream.
\end{proof}

% \begin{proposition}
% 	Let the next three classes of reward and transition functions for MDPs be defined as follows: $(i)$ $t_1:S_1\times A_1\to D(S_1), r_1:S_1\times A_1\to D(\R)$, $(ii)$ $t_2:S_2\times A_2\to D(S_2), r_2:S_2\to D(\R)$, $(iii)$ $t_3:S_3\times A_3\to D(S_3), r_3:S_3\times A_3\times S_3\to D(\R)$.
% 	The three definitions of MDPs can be embedded into $t:S\times A\to D(S\times \R)$.
% \end{proposition}
% \begin{proof}
% 	The first two embeddings are diagonal and constant. The third one requires redefining $S_3$, assuming that the reward function arguments are the current state, the current action and the resulting state following $t_3$.
% \end{proof}

\end{document}